\newcommand{\eg}{{\it e.g.}}
\newcommand{\ie}{{\it i.e.}}
\newcommand{\etc}{{\it etc.}}
\newcommand{\BA}{\begin{array}}
\newcommand{\EA}{\end{array}}
\newcommand{\BIT}{\begin{itemize}}
\newcommand{\EIT}{\end{itemize}}
\newcommand{\argmin}{\mathop{\rm argmin}}
\newcommand{\argmax}{\mathop{\rm argmax}}
\newcommand{\bsuite}{\texttt{bsuite}}
\newcommand{\bsuiteversion}{\texttt{bsuite2019}}
\newcommand{\bsuitegithub}{\href{http://github.com/deepmind/bsuite}{\texttt{github.com/deepmind/bsuite}}}
\newcommand{\bsuitetitle}[1]{
\rule{\linewidth}{4pt}
\vspace{-5mm}

\section{\hfil \LARGE \normalfont 
\bsuite\ report: #1
\vspace{2mm} \hfil
\vspace{-3mm}
}

\rule{\linewidth}{1pt}
}
\newcommand{\bsuiteabstract}{
{\small
\begin{adjustwidth}{1.5cm}{1.5cm}
The \textit{Behaviour Suite for Reinforcement Learning}, or \bsuite\ for short, is a collection of carefully-designed experiments that investigate core capabilities of a reinforcement learning (RL) agent.
The aim of the \bsuite\ project is to collect clear, informative and scalable problems that capture key issues in the design of efficient and general learning algorithms and study agent behaviour through their performance on these shared benchmarks.
This report provides a snapshot of agent performance on \bsuiteversion, obtained by running the experiments from \bsuitegithub\ \cite{osband2019bsuite}.
\end{adjustwidth}
}
}
\newcommand{\Ac}{\mathcal{A}}
\newcommand{\Sc}{\mathcal{S}}
\newcommand{\Rc}{\mathcal{R}}
\newcommand{\Pc}{\mathcal{P}}
\newcommand{\Opt}{\mathcal{O}}
\newcommand{\Hc}{\mathcal{H}}
\newcommand{\Fc}{\mathcal{F}}
\newcommand{\Mc}{\mathcal{M}}
\DeclarePairedDelimiterX{\infdivx}[2]{(}{)}{%
  #1\;\delimsize|\delimsize|\;#2%
}
\newcommand{\kld}[2]{\ensuremath{D_{KL}\infdivx{#1}{#2}}\xspace}
\newcommand{\Exp}{\mathbb{E}}
\newcommand{\Nat}{\mathbb{N}}
\newcommand{\Prob}{\mathbb{P}}
\newcommand{\Real}{\mathbb{R}}
\DeclareMathOperator*{\dprime}{{\prime \prime}}
\newtheorem*{theorem*}{Theorem}
\newtheorem{theorem}{Theorem}
\newtheorem{problem}{Problem}
\title{Making Sense of Reinforcement Learning\\and Probabilistic Inference}
\author{
Brendan O'Donoghue\thanks{These authors contributed equally to this work.}\, \thanks{DeepMind, London, UK, \texttt{\{bodonoghue,iosband,cdi\}@google.com}}
\And Ian Osband \footnotemark[1]\, \footnotemark[2]
\And Catalin Ionescu \footnotemark[2]
}
\begin{document}
\maketitle

\begin{abstract}
Reinforcement learning (RL) combines a control problem with statistical
estimation: The system dynamics are not known to the agent, but can be learned
through experience.  A recent line of research casts `RL as inference' and
suggests a particular framework to generalize the RL problem as probabilistic
inference.  Our paper surfaces a key shortcoming in that approach, and clarifies
the sense in which RL can be coherently cast as an inference problem.  In
particular, an RL agent must consider the effects of its actions upon future
rewards and observations: The exploration-exploitation tradeoff.  In all but the
most simple settings, the resulting inference is computationally intractable so
that practical RL algorithms must resort to approximation.  We demonstrate that
the popular `RL as inference' approximation can perform poorly in even very
basic problems. However, we show that with a small modification the
framework does yield algorithms that can provably perform well, and we
show that the resulting algorithm is equivalent to the recently proposed
K-learning, which we further connect with Thompson sampling.
\end{abstract}

\section{Introduction}
\label{sec:intro}

Probabilistic inference is a procedure of making sense of uncertain data using
Bayes' rule.  The optimal control problem is to take actions in a known system
in order to maximize the cumulative rewards through time.  Probabilistic
graphical models (PGMs) offer a coherent and flexible language to specify causal
relationships, for which a rich literature of learning and inference techniques
have developed \citep{koller2009probabilistic}.  Although control dynamics might
also be encoded as a PGM, the relationship between action planning and
probabilistic inference is not immediately clear.  For inference, it is
typically enough to specify the system and pose the question, and the objectives
for learning emerge automatically.  In control, the system and objectives are
known, but the question of how to approach a solution may remain
extremely complex \citep{bertsekas2005dynamic}.

Perhaps surprisingly, there is a deep sense in which inference and control can
represent a dual view of the same problem.  This relationship is most clearly
stated in the case of linear quadratic systems, where the Ricatti equations
relate the optimal control policy in terms of the system dynamics
\citep{welch1995introduction}.  In fact, this connection extends to a wide range
of systems, where control tasks can be related to a dual inference problem
through rewards as exponentiated probabilities in a distinct, but coupled, PGM
\citep{todorov2007linearly, todorov2008general}.  A great benefit of this
connection is that it can allow the tools of inference to make progress in
control problems, and vice-versa.  In both cases the connections provide new
insights, inspire new algorithms and enrich our understanding
\citep{toussaint2006probabilistic, ziebart2008maximum, kappen2012optimal}.

Reinforcement learning (RL) is the problem of learning to control an unknown
system \citep{sutton2018reinforcement}.  Like the control setting, an RL agent
should take actions to maximize its cumulative rewards through time.  Like the
inference problem, the agent is initially uncertain of the system dynamics, but
can learn through the transitions it observes.  This leads to a
fundamental tradeoff: The agent may be able to improve its understanding through
exploring poorly-understood states and actions, but it may be able to attain
higher immediate reward through exploiting its existing knowledge
\citep{kearns2002near}.  In many ways, RL combines control and inference into a
general framework for decision making under uncertainty.  Although there has
been ongoing research in this area for many decades, there has been a recent
explosion of interest as RL techniques have made high-profile breakthroughs in
grand challenges of artificial intelligence research
\citep{mnih-atari-2013,silver2016mastering}.

A popular line of research has sought to cast `RL as inference', mirroring the
dual relationship for control in known systems.  This approach is most clearly
stated in the tutorial and review of \cite{levine2018rlasinf}, and provides a
key reference for research in this field.  It suggests that a
\textit{generalization} of the RL problem can be cast as probabilistic inference
through inference over exponentiated rewards, in a continuation of previous work
in optimal control \citep{todorov2009efficient}.  This perspective promises
several benefits: A probabilistic perspective on rewards, the ability to apply
powerful inference algorithms to solve RL problems and a natural exploration
strategy.  In this paper we will outline an important way in which this
perspective is incomplete. This shortcoming ultimately results in algorithms
that can perform poorly in even very simple decision problems.
Importantly, these are not simply technical issues that show up in some edge
cases, but fundamental failures of this approach that arise in even the
most simple decision problems.

In this paper we revisit an alternative framing of `RL as inference'.  In fact,
we show that the \textit{original} RL problem was already an inference problem
all along.\footnote{Note that, unlike control, connecting RL with inference will
not involve a separate `dual' problem.} Importantly, this inference problem
includes inference over the agent's future actions and observations.  Of course,
this perspective is not new, and has long been known as simply the Bayes-optimal
solution, see, \eg, \citet{ghavamzadeh2015bayesian}.  The problem is that, due
to the exponential lookahead, this inference problem is fundamentally
intractable for all but the simplest problems \citep{gittins1979bandit}.  For
this reason, RL research focuses on computationally efficient approaches that
maintain a level of statistical efficiency \citep{furmston2010variational,
osband2017deep}.

We provide a review of the RL problem in Section \ref{sec:rl_problem}, together
with a simple and coherent framing of RL as probabilistic inference. In Section
\ref{sec:approx_bayes_opt} we present three approximations to the intractable
Bayes-optimal policy. We begin with the celebrated Thompson sampling algorithm,
then we review the popular `RL as inference' framing, as presented by
\cite{levine2018rlasinf}, and highlight a clear and simple shortcoming in
this approach. Finally, we review K-learning \citep{klearning}, which we
re-interpret as a modification to the RL as inference framework that provides a
principled approach to the statistical inference problem, as well as a
presenting a relationship with Thompson sampling. In Section
\ref{sec:computation} we present computational studies that support our claims.

\section{Reinforcement learning}
\label{sec:rl_problem}

We consider the problem of an agent taking actions in an unknown environment in
order to maximize cumulative rewards through time.  For simplicity, this paper
will model the environment as a finite horizon, discrete Markov Decision Process
(MDP) $M = (\Sc, \Ac, \Rc, \Pc, H, \rho)$.\footnote{This choice is for clarity;
continuous, infinite horizon, or partially-observed environments do not alter
our narrative.} Here $\Sc=\{1,..,S\}$ is the state space, $\Ac=\{1,..,A\}$ is
the action space and each episode is of fixed length $H \in \Nat$.  Each episode
$\ell \in \Nat$ begins with state $s_0 \sim \rho$ then for timesteps
$h=0,..,H-1$ the agent selects action $a_h$, observes transition $s_{h+1}$ with
probability $\Pc(s_{h+1}, s_h, a_h) \in [0,1]$ and receives reward $r_{h+1} \sim
\Rc(s_h, a_h)$, where we denote by $\mu(s_h, a_h) = \Exp r_{h+1}$ the mean
reward.  We define a policy $\pi$ to be a mapping from $\Sc$ to probability
distributions over $\Ac$ and write $\Pi$ for the space of all policies.  For any
timestep $t = (\ell, h)$, we define $\Fc_t=(s^0_0, a^0_0, r^0_1, ..,
s^\ell_{h-1}, a^\ell_{h-1}, r^\ell_{h})$ to be the sequence of observations made
before time $t$.  An RL algorithm maps histories to policies $\pi_t = {\rm
alg}(\Sc, \Ac, \Fc_t)$.

Our goal in the design of RL algorithms is to obtain good performance
(cumulative rewards) for an unknown $M \in \Mc$, where $\Mc$ is some
\textit{family} of possible environments.  Note that this is a different problem
from typical `optimal control', that seeks to optimize performance for one
particular known MDP $M$; although you might still fruitfully apply an RL
\textit{algorithm} to solve problems of that type.  For any environment $M$ and
any policy $\pi$ we can define the action-value function,
\begin{equation}
  \label{eq:value}
  Q_h^{M, \pi}(s, a) = \Exp_{\pi, M}\left[\sum_{j=h+1}^H r_j \mid s_h =s, a_h=a \right].
\end{equation}
Where the expectation in \eqref{eq:value} is taken with respect to the action
selection $a_j$ for $j>h$ from the policy $\pi$ and evolution of the fixed MDP
$M$.  We define the value function $V^{M, \pi}_h(s) = \Exp_{\alpha \sim \pi}
Q_h^{M, \pi}(s,\alpha)$ and write $Q_h^{M, \star}(s,a) = \max_{\pi \in
\Pi}Q^{M,\pi}_h(s,a)$ for the optimal Q-values over policies, and the optimal
value function is given by $V_h^{M, \star}(s) = \max_a Q_h^{M, \star}(s,a)$.

In order to compare algorithm performance across different environments, it is
natural to normalize in terms of the \textit{regret}, or shortfall in cumulative
rewards relative to the optimal value, 
\begin{equation}
  \label{eq:regret}
  {\rm Regret}(M, {\rm alg}, L) =
    \Exp_{M, {\rm alg}} \left[ \sum_{\ell=1}^L  \left(V^{M,\star}_0(s^\ell_0)
    -  \sum_{h=1}^H r^\ell_h\right)  \right].
\end{equation}
This quantity depends on the unknown MDP $M$, which is fixed from the start and
kept the same throughout, but the expectations are taken with respect to the
dynamics of $M$ and the learning algorithm ${\rm alg}$.  For any particular MDP
$M$, the optimal regret of zero can be attained by the non-learning algorithm
${\rm alg}_M$ that returns the optimal policy for $M$.

In order to assess the quality of a reinforcement learning algorithm, which is
designed to work across some \textit{family} of $M \in \Mc$, we need some method
to condense performance over a set to a single number.  There are two main
approaches to this:
\begin{eqnarray}
  \label{eq:bayes_regret}
  {\rm BayesRegret}(\phi, {\rm alg}, L) = \Exp_{M \sim \phi} {\rm Regret}(M, {\rm alg}, L), \\
  \label{eq:minimax_freq}
  {\rm WorstCaseRegret}(\Mc, {\rm alg}, L) = \max_{M \in \Mc} {\rm Regret}(M, {\rm alg}, L),
\end{eqnarray}
where $\phi$ is a prior over the family $\Mc$.  These differing objectives are
often framed as Bayesian (average-case) \eqref{eq:bayes_regret} and frequentist
(worst-case) \eqref{eq:minimax_freq} RL \footnote{Some frequentist
results are high-probability bounds on the worst case rather than true
worst-case bounds, but this distinction is not important for our purposes}.
Although these two settings are typically studied in isolation, it should be
clear that they are intimately related through the choice of $\Mc$ and $\phi$.
Our next section will investigate what it would mean to `solve' the RL problem.
Importantly, we show that both frequentist and Bayesian perspectives already
amount to a problem in probabilistic inference, without the need for additional
re-interpretation.

\subsection{Solving the RL problem through probabilistic inference}
\label{sec:solving_rl}

If you want to `solve' the RL problem, then formally the objective is clear:
find the RL algorithm that minimizes your chosen objective,
(\ref{eq:bayes_regret}) or (\ref{eq:minimax_freq}).  To anchor our discussion,
we introduce a simple decision problem designed to highlight some
key aspects of reinforcement learning.  We will revisit this problem setting as
we discuss approximations to the optimal policy.
\begin{problem}[One unknown action]
\label{problem:one_unknown}
Fix $N \in \Nat \ge 3, \epsilon > 0$ and define $\Mc_{N, \epsilon} =
\{M_{N,\epsilon}^+, M_{N,\epsilon}^-\}$.  Both $M^+$ and $M^-$ share $\Sc=\{1\},
H=1$ and $\Ac=\{1, .., N\}$; they only differ through their rewards:
\begin{eqnarray*}
\label{eq:rewards}
    \Rc^+(1) = 1, \quad \Rc^+(2) = +2, \quad \Rc^+(a) = 1 - \epsilon
        \ \text{ for } \ a=3,..,N, \\
    \Rc^-(1) = 1, \quad \Rc^-(2) = -2, \quad \Rc^-(a) = 1 - \epsilon
        \ \text{ for } \ a=3,..,N.
\end{eqnarray*}
Where $\Rc(a) = x \in \Real$ is a shorthand for deterministic reward of $x$ when
choosing action $a$.
\end{problem}

Problem \ref{problem:one_unknown} is extremely simple, it involves no
generalization and no long-term consequences: It is an independent bandit
problem with only one unknown action.  For \textit{known} $M^+, M^-$ the optimal
policy is trivial: Choose $a_t=2$ in $M^+$ and $a_t=1$ in $M^-$ for all $t$.  An
RL agent faced with \emph{unknown} $M \in \Mc$ should attempt to optimize the RL
objectives \eqref{eq:bayes_regret} or \eqref{eq:minimax_freq}.  Unusually, and
only because Problem \ref{problem:one_unknown} is so simple, we can actually
compute the \textit{optimal} solutions to both in terms of $L$ (the total number
of episodes) and $\phi = (p^+, p^-)$ where $p^+ = \Prob(M = M^+)$, the
probability of being in $M^+$.

For $L > 3$ an optimal \textit{minimax} (minimizing the worst-case regret) RL
algorithm is to first choose $a_0=2$ and  observe $r_1$.  If $r_1 = 2$ then you
know you are in $M^+$ so pick $a_t=2$ for all $t = 1,2..$, for ${\rm Regret}(L)
= 0$.  If $r_1 = -2$ then you know you are in $M^-$ so pick $a_t=1$ for all $t =
1,2..$, for ${\rm Regret}(L) = 3$.  The worst-case regret of this algorithm is
$3$, which cannot be bested by any algorithm.

Actually, the same RL algorithm is also \textit{Bayes}-optimal for any $\phi =
(p^+, p^-)$ provided $p^+ L > 3$.  This relationship is not a coincidence.  All
admissible solutions to the worst-case problem \eqref{eq:minimax_freq} are given
by solutions to the average-case \eqref{eq:bayes_regret} for some `worst-case'
prior $\tilde{\phi}$ \citep{wald1950statistical}.  As such, for ease of
exposition, our discussion will focus on the Bayesian (or average-case) setting.
However, readers should understand that the same arguments apply to the
worst-case objective.

In Problem \ref{problem:one_unknown}, the key probabilistic inference the agent
must consider is the effects of it own \textit{actions} upon the future rewards,
\ie, whether it has chosen action $2$.  Slightly more generally, where actions
are independent and episode length $H=1$, the optimal RL algorithm can be
computed via Gittins indices, but these problems are very much the exception
\citep{gittins1979bandit}.  In problems with generalization or long-term
consequences, computing the Bayes-optimal solution is computationally
intractable.  One example of an algorithm that converges to Bayes-optimal
solution in the limit of infinite computation is given by Bayes-adaptive
Monte-Carlo Planning \citep{guez2012efficient}.  The problem is that, even for
very simple problems, the lookahead tree of interactions between actions,
observations and algorithmic updates grows exponentially in the search depth
\citep{strehl2006pac}.  Worse still, direct computational approximations to the
Bayes-optimal solution can fail exponentially badly should they fall short of
the required computation \citep{munos2014bandits}.  As a result, research in
reinforcement learning amounts to trying to find computationally tractable
approximations to the Bayes-optimal policy that maintain some degree of
statistical efficiency.

\section{Approximations for computational and statistical efficiency}
\label{sec:approx_bayes_opt}

The exponential explosion of future actions and observations means solving
for the Bayes-optimal solution is computationally intractable.  To counter this,
most computationally efficient approaches to RL simplify the problem at time $t$
to only consider inference over the data $\Fc_t$ that has been gathered prior to
time $t$.  The most common family of these algorithms are `certainty equivalent'
(under an identity utility): They take a point estimate for their best guess of
the environment $\hat{M}$, and try to optimize their control given these
estimates $V^{\hat{M}, \star}$.  Typically, these algorithms are used in
conjunction with some dithering scheme for random action selection (\eg,
epsilon-greedy), to mitigate premature and suboptimal convergence
\citep{watkins1989learning}.  However, since these algorithms do not prioritize
their exploration, they may take exponentially long to find the optimal policy
\citep{osband2014generalization}.

In order for an RL algorithm to be statistically efficient, it must consider the
value of information.  To do this, an agent must first maintain some notion of
epistemic uncertainty, so that it can direct its exploration towards states and
actions that it does not understand well \citep{o2017uncertainty}.  Here again,
probabilistic inference finds a natural home in RL: We should build up posterior
estimates for the unknown problem parameters, and use this \textit{distribution}
to drive efficient exploration.\footnote{For the purposes of this paper, we will
focus on \textit{optimistic} approaches to exploration, although more
sophisticated information-seeking approaches merit investigation in future work
\citep{russo2014ids}.}

\subsection{Thompson sampling}
\label{sec:TS}

One of the oldest heuristics for balancing exploration with exploitation is
given by Thompson sampling, or probability matching
\citep{thompson1933likelihood}.  Each episode, Thompson sampling (TS) randomly
selects a policy according to the probability it is the optimal policy,
conditioned upon the data seen prior to that episode.  Thompson sampling is a
simple and effective method that successfully balances exploration with
exploitation \citep{russo2018tutorial}.

Implementing Thompson sampling amounts to an inference problem at each episode.
For each $s,a,h$ define the binary random variable $\Opt_h(s,a)$ where
$\Opt_h(s,a)=1$ denotes the event that action $a$ is optimal for state $s$ in
timestep $h$.\footnote{For the problem definition in Section
\ref{sec:rl_problem} there is always a deterministic optimal policy for $M$.}
The TS policy for episode $\ell$ is thus given by the inference problem,
\begin{equation}
\label{eq:TS}
    \pi^{\rm TS} \sim \Prob(\Opt \mid \Fc_{\ell}),
\end{equation}
where $\Prob(\Opt \mid \Fc_{\ell})$ is the \emph{joint} probability over all the
binary optimality variables (hereafter we shall suppress the dependence on
$\Fc_\ell$).
To understand how Thompson sampling guides exploration let us consider its
performance in Problem \ref{problem:one_unknown} when implemented with a uniform
prior $\phi=(\frac{1}{2}, \frac{1}{2})$.  In the first timestep the agent
samples $M_0 \sim \phi$.  If it samples $M^+$ it will choose action $a_0=2$ and
learn the true system dynamics, choosing the optimal arm thereafter.  If it
samples $M^-$ it will choose action $a_0=1$ and repeat the identical decision in
the next timestep.  Note that this procedure achieves BayesRegret 2 according
to $\phi$, but \textit{also} worst-case regret $3$, which matches the optimal
minimax performance despite its uniform prior.

Recent interest in TS was kindled by strong empirical performance in bandit
tasks \citep{chapelle2011empirical}.  Following work has shown that this
algorithm satisfies strong Bayesian regret bounds close to the known lower
bounds for MDPs, under certain assumptions
\citep{osband2016posterior,osband2016lower}.  However, although much simpler
than the Bayes-optimal solution, the inference problem in \eqref{eq:TS} can
still be prohibitively expensive.  Table \ref{table:TS} describes one approach
to performing the sampling required in \eqref{eq:TS} implicitly, by maintaining
an explicit model over MDP parameters.  This algorithm can be computationally
intractable as the MDP becomes large and so attempts to scale Thompson sampling
to complex systems have focused on \textit{approximate} posterior samples via
randomized value functions, but it is not yet clear under which settings these
approximations should be expected to perform well \citep{osband2017deep}.  As we
look for practical, scalable approaches to posterior inference one promising
(and popular) approach is known commonly as `RL as inference'.

\begin{table}[H]
\begin{center}
\caption{Model-based Thompson sampling.}
\label{table:TS}
\begin{tabular}{ c | c }
Before episode $\ell$ & Sample $M_\ell = (\Sc, \Ac, \Rc^\ell, \Pc^\ell, H, \rho) \sim \phi \mid \Fc_\ell$ \\
Bellman equation &
\parbox{8cm}{
\[
\begingroup
\renewcommand{\arraystretch}{1.7} %
\begin{array}{c}
Q_h^\ell(s , a) = \mu^\ell(s, a)+ \sum_{s'}\Pc^\ell(s', s, a) V^\ell_{h+1}(s')\\
V^\ell_h(s) = \max_a Q_h^\ell(s,a)
\end{array}
\endgroup
\]
} \\
Policy & $\pi_h^{\rm TS}(s,a) \in \argmax Q_h^\ell(s, a)$
\end{tabular}
\end{center}
\end{table}

\subsection{The `RL as inference' framework and its limitations}
\label{sec:rl_as_inference}

The computational challenges of Thompson sampling suggest an approximate
algorithm that replaces \eqref{eq:TS} with a parametric distribution suitable
for expedient computation.  It is possible to view the algorithms of the `RL as
inference' approach in this light \citep{rawlik2013stochastic,
todorov2009efficient, toussaint2009robot, deisenroth2013survey,
fellows2019virel}; see \cite{levine2018rlasinf} for a recent survey.  These
algorithms choose to model
the probability of optimality according to,
\begin{equation}
\label{eq:delta_function}
    \tilde \Prob(\Opt_h(s,a) | \tau_h(s,a) ) \propto \exp\left(\sum_{(s^\prime,a^\prime) \in \tau_h(s,a)} \beta \Exp^\ell \mu(s^\prime, a^\prime)\right).
\end{equation}
for some $\beta > 0$, where $\tau_h(s,a)$ is a trajectory (a sequence of
state-action pairs) starting from $(s,a)$ at timestep $h$, and where $\Exp^\ell$
denotes the expectation under the posterior at episode $\ell$.
With this potential in place one can perform Bayesian inference over the
unobserved `optimality' variables, obtaining posteriors over the policy or other
variables of interest.  This presentation of the RL as inference framework is
slightly closer to the one in \citet[\S 2.4.2.2]{deisenroth2013survey} than to
\citet{levine2018rlasinf}, but ultimately it produces the same family of
algorithms. We provide such a derivation in the appendix for completeness.

Applying inference procedures to \eqref{eq:delta_function} leads naturally to RL
algorithms with some `soft' Bellman updates, and added entropy regularization.
We describe the general structure of these algorithms in Table \ref{t-sq}.
These algorithmic connections can help reveal connections to policy gradient,
actor-critic, and maximum entropy RL methods \citep{mnih2016asynchronous,
o2016pgq, haarnoja17, haarnoja2018soft, eysenbach2018diversity}.  The problem is
that this resultant `posterior' derived using \eqref{eq:delta_function} does not
generally bear any close relationship to the agent's epistemic probability that
$(s,a,h)$ is optimal.

\begin{table}[H]
\begin{center}
\caption{Soft Q-learning.}
\label{t-sq}
\begin{tabular}{ c | c }
Bellman equation &
\parbox{8cm}{
\[
\begingroup
\renewcommand{\arraystretch}{1.7} %
\begin{array}{c}
\tilde Q_h(s, a) =  \Exp^\ell \mu(s, a)+ \sum_{s'}\Exp^\ell \Pc(s', s, a) \tilde V_{h+1}(s')\\
\tilde V_h(s) = \beta^{-1} \log \sum_{a} \exp\beta \tilde Q_h(s, a)
\end{array}
\endgroup
\]
} \\
Policy & $\pi_h^{\rm SQ}(s, a) \propto \exp \beta \tilde Q_h(s, a)$
\end{tabular}
\end{center}
\end{table}

To understand how `RL as inference' guides decision making, let us consider its
performance in Problem \ref{problem:one_unknown}.  Practical implementations of
`RL as inference' estimate $\Exp^\ell \mu$ through observations.  For $N$ large,
and without prior guidance, the agent is then extremely unlikely to select
action $a_t=2$ and so resolve its epistemic uncertainty.  Even for an informed
prior $\phi=(\frac{1}{2}, \frac{1}{2})$ action selection according to the
exploration strategy of Boltzmann dithering is unlikely to sample
action $2$ for which $\Exp^\ell \mu(2)=0$
\citep{levine2018rlasinf,cesa2017boltzmann}.  This is because the $N-1$
`distractor' actions with $\Exp^\ell \mu \ge 1-\epsilon$ are much more probable
under the Boltzmann policy.

This problem is the same problem that afflicts most dithering approaches to
exploration.  `RL as inference' as a framework does not incorporate an agents
epistemic uncertainty, and so can lead to poor policies for even simple
problems.  While \eqref{eq:delta_function} allows the construction of a dual
`posterior distribution', this distribution does not generally bear any relation
to the typical posterior an agent should compute conditioned upon the data it has
gathered, \eg, equation \eqref{eq:TS}.  Despite this shortcoming RL as inference
has inspired many interesting and novel techniques, as well as delivered
algorithms with good performance on problems where exploration is not the
bottleneck \citep{eysenbach2018diversity}.  However, due to the use of language
about `optimality' and `posterior inference' \etc, it may come as a surprise to
some that this framework does not truly tackle the Bayesian RL problem.  Indeed,
algorithms using `RL as inference' can perform very poorly on problems where
accurate uncertainty quantification is crucial to performance.  We hope that
this paper sheds some light on the topic.

\subsection{Making sense of `RL as Inference' via K-learning}
\label{sec:K-learning}

In this section we suggest a subtle alteration to the `RL as inference'
framework that develops a coherent notion of optimality.  The K-learning
algorithm was originally introduced through a risk-seeking exponential utility
\citep{klearning}.  In this paper we re-derive this algorithm as a principled
approximate inference procedure with clear connections to Thompson sampling, and
we highlight its similarities to the `RL as inference' framework.  We believe that
this may offer a road towards combining the respective strengths of Thompson
sampling and the `RL as inference' frameworks.  First, consider the following
approximate conditional optimality probability at $(s,a,h)$:
\begin{equation}
\label{e-kl-approx}
\tilde \Prob( \Opt_h(s, a) | Q^{M, \star}_h(s,a) ) \propto \exp \beta Q^{M, \star}_h(s,a),
\end{equation}
for some $\beta > 0$,
and note that this is conditioned on the random variable $Q^{M, \star}_h(s,a)$.
We can marginalize over possible Q-values yielding
\begin{equation}
\label{e-pol-cgf}
\tilde \Prob(\Opt_h(s, a)) = \int \tilde \Prob(\Opt_h(s, a) | Q^{M, \star}_h(s,a)) d\Prob(Q^{M, \star}_h(s,a)) \propto \exp G^{Q}_h(s, a, \beta),
\end{equation}
where $G^{Q}_h(s, a, \cdot)$ denotes the cumulant generating function of the random
variable $Q^{M, \star}_h(s,a)$ \citep{kendall1946advanced}.  Clearly K-learning
and the `RL as inference' framework are similar, since equations
\eqref{eq:delta_function} and \eqref{e-kl-approx} are closedly linked, but there
is a crucial difference.  Notice that the integral performed in
\eqref{e-pol-cgf} is with respect to the \emph{posterior} over $Q^{M,
\star}_h(s,a)$, which includes the epistemic uncertainty explicitly.

\begin{table}[H]
\begin{center}
\caption{K-learning.}
\label{t-kl}
\begin{tabular}{ c | c }
Before episode $\ell$ &  Calculate $\beta_\ell = \beta\sqrt{\ell}$\\
Bellman equation &
\parbox{10cm}{
\[
\begingroup
\renewcommand{\arraystretch}{1.7} %
\begin{array}{c}
\displaystyle K_{h}(s,a) = \Exp^\ell \mu(s,a) + \frac{\sigma^2 \beta_\ell}{2 n^\ell(s,a)}+ \sum_{s^\prime}\Exp^\ell \Pc(s', s, a)V^{\rm K}_{h+1}(s')\\
V^{\rm K}_h(s) = \beta_\ell^{-1} \log \sum_{a} \exp \beta_\ell K_h(s,a)
\end{array}
\endgroup
\]
} \\
Policy & $\pi_h^{\rm K}(s, a) \propto \exp \beta_\ell K_h(s,a)$
\end{tabular}
\end{center}
\end{table}

Given the approximation to the posterior probability of optimality in
(\ref{e-pol-cgf}) we could sample actions from it as our policy, as done by
Thompson sampling \eqref{eq:TS}. However, that requires computation of the
cumulant generating function $G^Q_h(s,a, \beta)$, which is non-trivial.  It was
shown in \citep{klearning} that an upper bound to the cumulant generating
function could be computed by solving a particular `soft' Bellman equation.  The
resulting K-values, denoted $K_h(s,a)$ at $(s,a, h)$, are also optimistic for
the expected optimal Q-values. Specifically, for any sequence $\{\beta_\ell\}$
the following holds
\begin{equation}
\label{e-optimism}
K_{h}(s,a) \geq \beta_\ell^{-1} G^Q_h(s,a, \beta_\ell) \geq \Exp^\ell Q^{M, \star}_h(s,a).
\end{equation}
Following a Boltzmann policy over these K-values satisfies a Bayesian regret
bound which matches the current best bound for Thompson sampling up to
logarithmic factors under the same set of assumptions.  We summarize the
K-learning algorithm in Table (\ref{t-kl}), where $\beta > 0$ is a constant and
and $n^\ell(s,a)$ is the \emph{visitation count} of $(s,a)$ before episode
$\ell$, \ie, the number of times the agent has taken action $a$ at state $s$,
and $\sigma > 0$ is a constant. The uncertainty in the transition function is
incorporated into the constant $\sigma$, which is a technical detail we omit
here for clarity, see \citep{klearning} for details. In this way the agent is
given a reward signal that includes a bonus which is higher for states and
actions that the agent has visited less frequently.

Comparing Tables \ref{t-sq} and \ref{t-kl} it is clear that soft Q-learning and
K-learning share some similarities: They both solve a `soft' value function and
use Boltzmann policies.  However, the differences are important.  Firstly,
K-learning has an explicit schedule for the inverse temperature parameter
$\beta_\ell$, and secondly it adds a bonus based on visitation count to the
expected reward.  These two relatively small changes make K-learning a
principled exploration and inference strategy.

To understand how K-learning drives exploration, consider its performance on
Problem \ref{problem:one_unknown}.  Since this is a bandit problem we can
compute the cumulant generating functions for each arm and then use the policy
given by (\ref{e-pol-cgf}). For any non-trivial prior and choice of $\beta > 0$
the cumulant generating function is optimistic for arm $2$ which results in the
policy selecting arm $2$ more frequently, thereby resolving its epistemic
uncertainty. As $\beta \rightarrow \infty$ K-learning converges to the policy of
pulling arm $2$ deterministically.  This is in contrast to soft Q-learning where
arm $2$ is exponentially \textit{unlikely} to be selected as the exploration
parameter $\beta$ grows.

\subsubsection{Connections between K-learning and Thompson sampling}
\label{sec:K_thompson}

Since K-learning can be viewed as approximating the posterior probability of
optimality of each action it is natural to ask how close an approximation it is.
A natural way to measure this similarity is the Kullback–Leibler (KL) divergence
between the distributions,
\[
\kld{\Prob(\Opt_h(s))}{\pi_h^{\rm K}(s)} = \sum_{a} \Prob(\Opt_h(s,a)) \log (\Prob(\Opt_h(s,a)) / \pi_h^{\rm K}(s,a)),
\]
where we are using the notation $\Opt_h(s) = \Opt_h(s, \cdot)$ and $\pi_h^{\rm
K}(s) = \pi_h^{\rm K}(s, \cdot)$.  This is different to the usual notion of
distance that is taken in variational Bayesian methods, which would typically
reverse the order of the arguments in the KL divergence
\citep{blundell2015weight}.  However, in RL that `direction' is not appropriate:
a distribution minimizing $\kld{\pi_h(s)}{\Prob(\Opt_h(s))}$ may put zero
probability on regions of support of $\Prob(\Opt_h(s))$.  This means an action
with non-zero probability of being optimal might \emph{never} be taken.  On the
other hand a policy minimizing $\kld{\Prob(\Opt_h(s))}{\pi_h(s)}$ must assign a
non-zero probability to every action that has a non-zero probability of being
optimal, or incur an infinite KL divergence penalty.  With this characterization
in mind, and noting that the Thompson sampling policy satisfies $\Exp^\ell
\pi^{\rm TS}_h(s) = \Prob(\Opt_h(s))$, our next result links the policies of
K-learning to Thompson sampling.
\begin{theorem} \label{thm-kl}
The K-learning value function $V^{\rm K}$ and policy $\pi^{\rm K}$ defined in
Table \ref{t-kl} satisfy the following bound at every state $s \in \Sc$ and
$h=0, \ldots H$:
\begin{equation}
\label{e-thm-kl}
V^{\rm K}_h(s) \geq \Exp V^{M, \star}_h(s) + \beta^{-1} \kld{\Prob(\Opt_h(s))}{\pi_h^{\rm K}(s)}.
\end{equation}
\end{theorem}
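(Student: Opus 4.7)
The plan is to combine a variational rewrite of $V^{\rm K}_h(s)$ with the optimism inequality (\ref{e-optimism}) and a conditional Jensen argument. Write $p(a) := \Prob(\Opt_h(s, a))$ and $a^\star = a^\star(M)$ for the optimal action at $(s, h)$ under the random MDP $M$, with any measurable tie-breaking so that $\sum_a \Prob(a^\star = a) = 1$. Since $\pi^{\rm K}_h(s, a) = \exp(\beta K_h(s,a))/\exp(\beta V^{\rm K}_h(s))$, expanding $\kld{p}{\pi^{\rm K}_h(s)}$ and rearranging yields the purely algebraic identity
\begin{equation*}
V^{\rm K}_h(s) = \sum_a p(a) K_h(s,a) - \beta^{-1} \sum_a p(a) \log p(a) + \beta^{-1} \kld{p}{\pi^{\rm K}_h(s)},
\end{equation*}
valid for every probability distribution $p$ on $\Ac$. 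The theorem therefore reduces to showing $\sum_a p(a) K_h(s,a) - \beta^{-1} \sum_a p(a) \log p(a) \geq \Exp V^{M,\star}_h(s)$.

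To attack this, I would use (\ref{e-optimism}) in its equivalent MGF form, $\exp(\beta K_h(s,a)) \geq \Exp \exp(\beta Q^{M,\star}_h(s,a))$. The main obstacle is that this CGF bound speaks about $\Exp e^{\beta Q}$ action by action, whereas the target involves $\Exp \max_a Q$. The crucial move is to restrict the expectation on the right to the event $\{a^\star = a\}$ and then apply Jensen's inequality to the convex map $z \mapsto e^{\beta z}$:
\begin{equation*}
\Exp e^{\beta Q^{M,\star}_h(s,a)} \geq p(a)\, \Exp\!\bigl[e^{\beta Q^{M,\star}_h(s,a)} \,\bigm|\, a^\star = a\bigr] \geq p(a)\, \exp\!\bigl(\beta\, \Exp[Q^{M,\star}_h(s,a) \mid a^\star = a]\bigr).
\end{equation*}
Taking logs and dividing by $\beta$ gives $K_h(s,a) \geq \beta^{-1} \log p(a) + \Exp[Q^{M,\star}_h(s,a) \mid a^\star = a]$ for every $a$ with $p(a) > 0$.

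Finally, I would multiply by $p(a)$ and sum over $a$. The second term collapses by the tower property to $\sum_a p(a)\, \Exp[Q^{M,\star}_h(s,a) \mid a^\star = a] = \Exp Q^{M,\star}_h(s, a^\star) = \Exp V^{M,\star}_h(s)$, while the first contributes $\beta^{-1} \sum_a p(a) \log p(a)$, which cancels the entropy term in the identity and leaves exactly the bound (\ref{e-thm-kl}). Actions with $p(a) = 0$ contribute zero on both sides under the convention $0 \log 0 = 0$, and any ambiguity in the argmax is absorbed into the tie-breaking rule. Note that the argument is pointwise in $(s, h)$, so no backwards induction in $h$ is required; the only substantive step is the conditional Jensen inequality, which is what converts a family of per-action MGF bounds into control on the expected maximum via the optimality probabilities.
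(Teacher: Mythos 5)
Your proof is correct, and it takes a genuinely different route from the paper's. The paper constructs an approximate joint posterior over $(V^{M,\star}_h(s),\Opt_h(s,a))$ by exponentially tilting the posterior $\Prob(Q^{M,\star}_h(s,a))$, expands the KL divergence between the true and approximate joint posteriors via the chain rule, bounds each piece, and then uses monotonicity of KL under marginalization to extract \eqref{e-thm-kl}; the only external input is the same bound $\beta K_h(s,a)\geq G^Q_h(s,a,\beta)$ that you invoke. You bypass the joint-posterior construction entirely: your log-sum-exp identity for $V^{\rm K}_h(s)$ isolates the KL term exactly, and your per-action bound $K_h(s,a)\geq \beta^{-1}\log \Prob(\Opt_h(s,a)) + \Exp[Q^{M,\star}_h(s,a)\mid a^\star=a]$ --- obtained by restricting the MGF to the event $\{a^\star=a\}$ and applying conditional Jensen --- does the work of the paper's tilted-measure manipulations (it is the tilting/Donsker--Varadhan inequality in elementary form; note that on $\{a^\star=a\}$ one has $Q^{M,\star}_h(s,a)=V^{M,\star}_h(s)$, which is exactly what the paper's identity $\sum_a \Prob(\Opt_h(s,a))\,\Exp(Q^{M,\star}_h(s,a)\mid\Opt_h(s,a))=\Exp V^{M,\star}_h(s)$ encodes). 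Your argument is shorter, avoids handling densities of the posterior and of the tilted conditional distribution, and treats ties and zero-probability actions explicitly, which the paper leaves implicit; what the paper's longer route buys is the interpretation of K-learning as approximate inference, since its intermediate bound on the KL between the true and approximate \emph{joint} posteriors is meaningful in its own right. Both arguments are pointwise in $(s,h)$, so neither needs backward induction.
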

We defer the proof to Appendix \ref{app:proof_thm}.  This theorem tells us that
the distance between the true probability of optimality and the K-learning
policy is bounded for any choice of $\beta < \infty$. In other words, if there
is an action that might be optimal then K-learning will eventually take that
action.

\subsection{Why is `RL as Inference' so popular?}
\label{sec:rl_ai_popular}

The sections above outline some surprising ways that the `RL as inference'
framework can drive suboptimal behaviour in even simple domains.  The question
remains, why do so many popular and effective algorithms lie within this class?
The first, and most important point, is that these algorithms can perform
extremely well in domains where efficient exploration is not a bottleneck.
Furthermore, they are often easy to implement and amenable to function
approximation \citep{peters2010relative, kober2009policy, map2018}.  Our
discussion of K-learning in Section \ref{sec:K-learning} shows that a relatively
simple fix to this problem formulation can result in a framing of RL as
inference that maintains a coherent notion of optimality.  Computational results
show that, in tabular domains, K-learning can be competitive with, or even
outperform Thompson sampling strategies, but extending these results to
large-scale domains with generalization is an open question
\citep{klearning,osband2017deep}.

The other observation is that the `RL as inference' can provide useful insights
to the structure of particular \textit{algorithms} for RL.  It is valid to note
that, under certain conditions, following policy gradient is equivalent to a
dual inference problem where the `probabilities' play the role of dummy
variables, but are not supposed to represent the probability of optimality in
the RL problem.  In this light, \citet{levine2018rlasinf} presents the inference
framework as a way to generalize a wide range of state of the art RL algorithms.
However, when taking this view, you should remember that this inference duality
is limited to certain RL algorithms, and without some modifications (e.g.
Section \ref{sec:K-learning}) this perspective is in danger of overlooking
important aspects of the RL problem.

\section{Computational experiments}
\label{sec:computation}

\subsection{One unknown action (Problem \ref{problem:one_unknown})}
\label{sec:one_unknown}

Consider the environment of Problem \ref{problem:one_unknown} with uniform prior
$\phi=(\frac{1}{2}, \frac{1}{2})$.  We fix $\epsilon = 1e-3$ and consider how
the Bayesian regret varies with $N > 3$.  Figure \ref{fig:problem_1} compares
how the regret scales for Bayes-optimal ($1.5$), Thompson sampling ($2$),
K-learning ($\le2.2$) and soft Q-learning (which grows linearly in $N$ for the
optimal $\beta \rightarrow 0$, but would typically grow exponentially for $\beta
> 0$).  This highlights that, even in a simple problem, there can be great value
in considering the value of information.

\begin{figure}[h!]
\centering
\begin{minipage}{.47\textwidth}
  \vspace{0mm}
  \centering
  \includegraphics[width=1\linewidth]{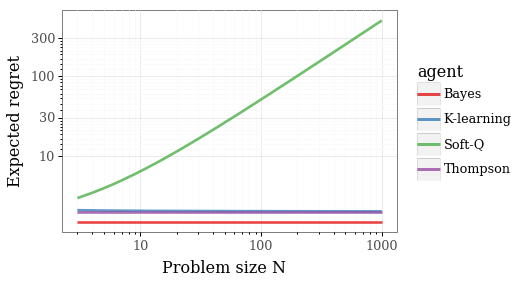}
  \vspace{-7mm}
  \captionof{figure}{Regret scaling on Problem \ref{problem:one_unknown}.\\ Soft Q-learning does not scale gracefully\\ with $N$.}
  \label{fig:problem_1}
\end{minipage}%
\vspace{2mm}
\begin{minipage}{.47\textwidth}
  \centering
  \vspace{-6mm}
  \includegraphics[width=.6\linewidth]{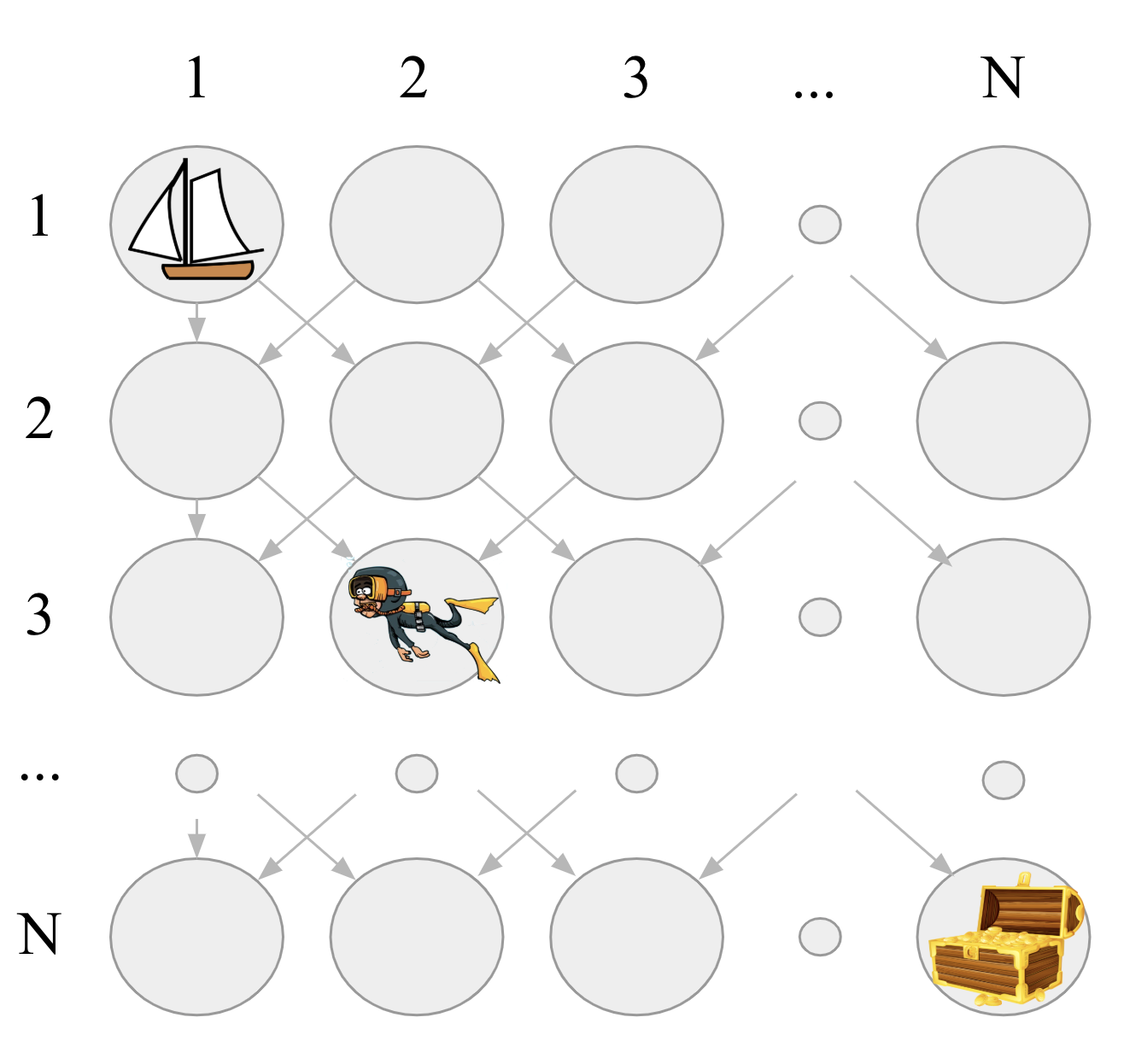}
  \vspace{-1mm}
  \captionof{figure}{DeepSea exploration: A simple\\ example where deep exploration is critical.}
  \label{fig:deepsea-pic}
\end{minipage}
\end{figure}

\subsection{`DeepSea' exploration}

Our next set of experiments considers the `DeepSea' MDPs introduced by
\citet{osband2017deep}.  At a high level this problem represents a `needle in a
haystack', designed to require efficient exploration, the complexity of which
grows with the problem size $N \in \Nat$.  DeepSea
(Figure~\ref{fig:deepsea-pic}) is a scalable variant of the `chain MDPs' popular
in exploration research \citep{jaksch2010near}.  \footnote{DeepSea figure taken
with permission from the `bsuite' \cite{osband2019bsuite}}

The agent begins each episode in the top-left state in an $N \times N$ grid.  At
each timestep the agent can move left or right one column, and falls one row.
There is a small negative reward for heading right, and zero reward for left.
There is only one rewarding state, at the bottom right cell.  The only way the
agent can receive positive reward is to choose to go right in each
timestep.  Algorithms that do not perform \textit{deep exploration} will take an
exponential number of episodes to learn the optimal policy, but those that
prioritize informative states and actions can learn much faster.

Figure \ref{fig:deep_sea} shows the `time to learn' for tabular implementations
of K-learning (Section \ref{sec:K-learning}), soft Q-learning (Section
\ref{sec:rl_as_inference}) and Thompson sampling (Section \ref{sec:TS}).  We
implement each of the algorithms with a $N(0,1)$ prior for rewards and ${\rm
Dirichlet}(1/N)$ prior for transitions.  Since these problems are small and
tabular, we can use conjugate prior updates and exact MDP planning
via value iteration.  As expected, Thompson sampling and K-learning scale
gracefully to large domains but soft Q-learning does not.

\subsection{Behaviour Suite for Reinforcement Learning}

So far our experiments have been confined to the tabular setting, but the main
focus of `RL as inference' is for scalable algorithms that work with
generalization.  In this section we show that the same insights we built in the
tabular setting extend to the setting of deep RL.  To do this we implement
variants of Deep Q-Networks with a single layer, 50-unit MLP
\citep{mnih-atari-2013}.  To adapt K-learning and Thompson sampling to this deep
RL setting we use an ensemble of size 20 with randomized prior functions to
approximate the posterior distribution over neural network Q-values
\citep{osband2018randomized} (full experimental details are included in Appendix
\ref{app:bsuite-implementation}).  We then evaluate all of the algorithms on
\texttt{bsuite}: A suite of benchmark tasks designed to highlight key issues in
RL \citep{osband2019bsuite}.

In particular, \texttt{bsuite} includes an evaluation on the DeepSea problems
but with a one-hot pixel representation of the agent position.  In Figure
\ref{fig:deep_sea_fn} we see that the results for these deep RL implementations
closely match the observed scaling for the tabular setting.  In particular, the
algorithms motivated by Thompson sampling and K-learning both scale gracefully
to large problem sizes, where soft Q-learning is unable to drive deep
exploration.  Our \texttt{bsuite} evaluation includes many more experiments that
can be fit into this paper, but we provide a link to the complete results at
\href{http://bit.ly/rl-inference-bsuite}{\texttt{bit.ly/rl-inference-bsuite}}.
In general, the results for Thompson sampling and K-learning are similar, with
soft Q-learning performing significantly worse on `exploration' tasks.  We push
a summary of these results to Appendix \ref{app:bsuite-report}.

\begin{figure}[h!]
\centering
\begin{subfigure}[t]{.46\textwidth}
  \centering
  \includegraphics[height=40mm, width=\linewidth, keepaspectratio]{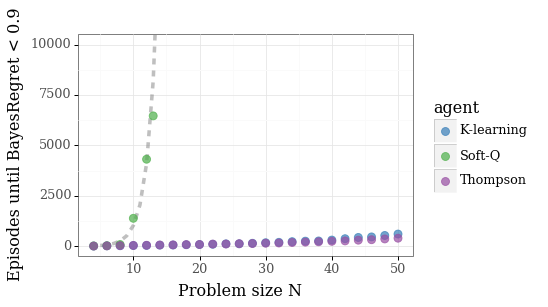}
  \caption{Tabular state representation.}
  \label{fig:deep_sea}
\end{subfigure}
\hspace{4mm}
\begin{subfigure}[t]{.46\textwidth}
  \centering
  \includegraphics[height=40mm, width=\linewidth, keepaspectratio]{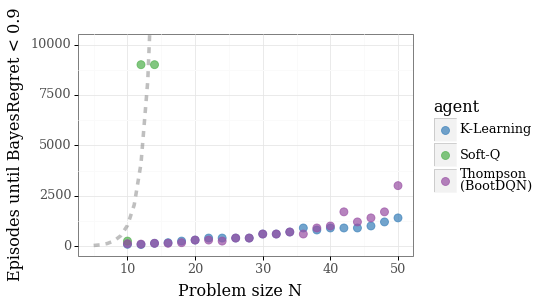}
  \caption{One-hot pixel representation into neural net.}
  \label{fig:deep_sea_fn}
\end{subfigure}
\vspace{-2mm}
\caption{Learning times for DeepSea experiments. Dashed line represents $2^N$.}
\label{fig:memory_len}
\end{figure}

\section{Conclusion}

This paper aims to make sense of reinforcement learning and probabilistic
inference.  We review the reinforcement learning problem and show that this
problem of optimal learning already combined the problems of control and
inference.  As we highlight this connection, we also clarify some potentially
confusing details in the popular `RL as inference' framework.  We show that,
since this problem formulation ignores the role of epistemic uncertainty, that
algorithms derived from that framework can perform poorly on even simple tasks.
Importantly, we also offer a way forward, to reconcile the views of RL and
inference in a way that maintains the best pieces of both.  In particular, we
show that a simple variant to the RL as inference framework (K-learning) can
incorporate uncertainty estimates to drive efficient exploration.  We support
our claims with a series of simple didactic experiments.  We leave the crucial
questions of how to scale these insights up to large complex domains for future
work.

\bibliographystyle{iclr2020_conference}
\bibliography{rl_inference}

\newpage

\section*{Appendix}

\subsection{Soft Q-learning derivation}

We present a derivation of soft Q-learning from the RL as inference
parametric approximation to the probability of optimality. Although
our presentation is slightly different to that of \citet{levine2018rlasinf}
we show here that the resulting algorithms are essentially identical.
Recall from equation (\ref{eq:delta_function}) that the parametric approximation
to optimality we consider is given by
\begin{align*}
\tilde \Prob(\Opt_h(s,a) | \tau_h(s,a) ) &\propto \exp\left(\sum_{(s^\prime,a^\prime) \in \tau_h(s,a)} \beta \Exp^\ell \mu(s^\prime, a^\prime)\right)\\
&= \exp(\beta \Exp^\ell \mu(s,a))\exp\left(\sum_{(s^{\dprime},a^{\dprime}) \in \tau_{h+1}(s^\prime,a^\prime)} \beta \Exp^\ell \mu(s^{\dprime}, a^{\dprime})\right)\\
&= \exp(\beta \Exp^\ell \mu(s,a))\tilde \Prob(\Opt_{h+1}(s^\prime,a^\prime) | \tau_{h+1}(s^\prime,a^\prime))
\end{align*}
where $\tau_h(s,a)$ is a trajectory starting from $(s,a)$ at time $h$ and $\beta
> 0$ is a hyper-parameter. Now we must marginalize out the possible trajectories
$\tau_h$ using the (unknown) system dynamics. Since this is a
certainty-equivalent algorithm we shall use the expected value of the transition
probabilities, under the posterior at episode $\ell$, which means we can write
\[
\tilde \Prob(\tau_h(s,a)) = \Exp^\ell \Pc(s^\prime, s, a) p(a^\prime | s^\prime) \tilde \Prob(\tau_{h+1}(s^\prime, a^\prime)),
\]
and we make the additional assumption that the `prior' $p(a | s)$ is
uniform across all actions $a$ for each $s$ (this assumption is standard in
this framework, see \citet{levine2018rlasinf}).
In this case we obtain
\begin{align*}
\tilde \Prob(\Opt_h(s,a)) &= \sum_{\tau_h(s,a)}\tilde \Prob(\Opt_h(s,a) | \tau_h(s,a)) \tilde \Prob(\tau_h(s,a))\\
&\propto \exp(\beta \Exp^\ell \mu(s,a)) \sum_{s^\prime, a^\prime} \Exp^\ell \Pc(s^\prime, s, a) \sum_{\tau_{h+1}^\prime(s^\prime, a^\prime)}  \tilde \Prob(\Opt_{h+1}(s^\prime, a^\prime) | \tau_{h+1}^\prime(s^\prime, a^\prime))\tilde  \Prob(\tau_{h+1}^\prime(s^\prime, a^\prime)) \\
&=\exp(\beta \Exp^\ell \mu(s,a)) \sum_{s^\prime, a^\prime} \Exp^\ell \Pc(s^\prime, s, a)\tilde  \Prob(\Opt_{h+1}(s^\prime, a^\prime)).
\end{align*}
Now with this we can rewrite
\[
\log \tilde \Prob(\Opt_h(s,a)) =\beta \Exp^\ell \mu(s,a) +  \log \sum_{s^\prime, a^\prime} \Exp^\ell \Pc(s^\prime, s, a) \tilde \Prob(\Opt_{h+1}(s^\prime, a^\prime)) - \log Z(s)
\]
where $Z(s)$ is the normalization constant for state $s$, since $\sum_a \tilde
\Prob(\Opt_h(s,a)) = 1$ for any $s$, and using Jensen's we have the following
bound \[
\log \tilde \Prob(\Opt_h(s,a)) \geq\beta \Exp^\ell \mu(s,a) + \sum_{s^\prime} \Exp^\ell \Pc(s^\prime, s, a) \log \sum_{a^\prime} \tilde \Prob(\Opt_{h+1}(s^\prime, a^\prime)) - \log Z(s)
\]
now if we introduce the soft Q-values that satisfy the soft Bellman equation
\[
\tilde Q_h(s, a) = \Exp^\ell \mu(s,a) + \sum_{s^\prime} \Exp^\ell \Pc(s^\prime, s, a) \beta^{-1} \log \sum_{a^\prime} \exp \beta \tilde Q_{h+1}(s^\prime, a^\prime)
\]
then
\[
\tilde \Prob(\Opt_h(s,a)) \approx \exp \beta \tilde Q_h(s, a) / \sum_b \exp \beta \tilde Q_h(s,b)
\]
and we have the soft Q-learning algorithm (the approximation comes from
the fact we used Jensen's inequality to provide a bound).

\subsection{Proof of theorem \ref{thm-kl}}
\label{app:proof_thm}

\begin{theorem*}
The K-learning value function $V^{\rm K}$ and policy $\pi^{\rm K}$ defined in Table
\ref{t-kl} satisfy the following bound at every state $s \in \Sc$ and $h=0, \ldots H$:
\begin{equation*}
V^{\rm K}_h(s) \geq \Exp V^{M, \star}_h(s) + \beta^{-1} \kld{\Prob(\Opt_h(s))}{\pi_h^{\rm K}(s)}.
\end{equation*}
\end{theorem*}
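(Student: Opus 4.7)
The plan is to prove the bound directly at each $(s,h)$ by expanding the KL divergence using the definitions of $\Prob(\Opt_h(s,a))$ from (\ref{e-pol-cgf}) and $\pi_h^{\rm K}(s,a)$ from Table \ref{t-kl}, and then invoking the optimism inequality (\ref{e-optimism}). No induction on $h$ should be required: the argument at each level only uses the log-sum-exp relationship between $K_h$ and $V^{\rm K}_h$ together with the already-established optimism bound on $K_h$ itself.

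First I would write $\Prob(\Opt_h(s,a)) = \exp G^Q_h(s,a,\beta_\ell) / Z$ with partition function $Z = \sum_b \exp G^Q_h(s,b,\beta_\ell)$, and $\pi_h^{\rm K}(s,a) = \exp\!\bigl(\beta_\ell K_h(s,a) - \beta_\ell V^{\rm K}_h(s)\bigr)$, using that $\beta_\ell V^{\rm K}_h(s) = \log\sum_a \exp \beta_\ell K_h(s,a)$ is the Boltzmann normalizer. Substituting into the KL divergence, using $\sum_a \Prob(\Opt_h(s,a)) = 1$, yields
\[
\kld{\Prob(\Opt_h(s))}{\pi_h^{\rm K}(s)} = \sum_a \Prob(\Opt_h(s,a))\bigl[G^Q_h(s,a,\beta_\ell) - \beta_\ell K_h(s,a)\bigr] + \beta_\ell V^{\rm K}_h(s) - \log Z.
\]
The crucial move is to apply the optimism inequality $\beta_\ell K_h(s,a) \ge G^Q_h(s,a,\beta_\ell)$ from (\ref{e-optimism}), which makes every bracketed summand non-positive. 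Dropping those terms and rearranging gives the intermediate inequality $\beta_\ell V^{\rm K}_h(s) \ge \log Z + \kld{\Prob(\Opt_h(s))}{\pi_h^{\rm K}(s)}$.

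It then remains to lower-bound $\log Z$ by $\beta_\ell \Exp V^{M,\star}_h(s)$. Using the definition $G^Q_h(s,a,\beta_\ell) = \log \Exp \exp(\beta_\ell Q^{M,\star}_h(s,a))$, I would bound $Z = \sum_a \Exp \exp(\beta_\ell Q^{M,\star}_h(s,a)) \ge \Exp \max_a \exp(\beta_\ell Q^{M,\star}_h(s,a)) = \Exp \exp(\beta_\ell V^{M,\star}_h(s))$, and a final Jensen step applied to the convex function $\exp$ yields $\log Z \ge \beta_\ell \Exp V^{M,\star}_h(s)$. Dividing the combined inequality by $\beta_\ell$ produces the claim.

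The main obstacle I anticipate is notational rather than conceptual. One must check that $\Prob(\Opt_h(s))$ in the theorem is to be read as the approximate posterior from (\ref{e-pol-cgf}) and that $\Exp$ in the statement is the posterior expectation $\Exp^\ell$ at the current episode, so that the K-learning quantities and the cumulant generating function are all written under the same law; one must also reconcile the $\beta$ used in (\ref{e-pol-cgf}) with the $\beta_\ell$ in Table \ref{t-kl}. Provided this bookkeeping is done carefully, every step of the argument reduces to a one-line consequence of either a definition, the optimism lemma (\ref{e-optimism}), or Jensen's inequality.
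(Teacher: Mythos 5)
Your inequality chain is internally consistent, but it proves a different statement from the one in the theorem. The problem is your very first substitution, $\Prob(\Opt_h(s,a)) = \exp G^Q_h(s,a,\beta)/Z$: that identity holds only for the \emph{approximate} optimality distribution $\tilde\Prob$ of equation (\ref{e-pol-cgf}), whereas the $\Prob(\Opt_h(s))$ in Theorem \ref{thm-kl} is the true posterior probability of optimality --- the same object the Thompson sampling policy matches via $\Exp^\ell \pi^{\rm TS}_h(s) = \Prob(\Opt_h(s))$, which is precisely why the theorem is presented as a link to Thompson sampling and why its stated consequence (``any action with non-zero probability of being optimal is eventually taken'') makes sense. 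For the true posterior, $\log \Prob(\Opt_h(s,a))$ is not $G^Q_h(s,a,\beta) - \log Z$, so your KL expansion and the cancellation against $\log Z$ do not go through. What your argument actually establishes is $V^{\rm K}_h(s) \geq \Exp V^{M,\star}_h(s) + \beta^{-1}\kld{\tilde\Prob(\Opt_h(s))}{\pi^{\rm K}_h(s)}$, which is not the claim; you flagged this reading as ``bookkeeping,'' but it is the substantive gap.

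The missing ingredient is an inequality tying the cumulant generating function to the \emph{true} optimality probabilities. Restricting the expectation defining $G^Q$ to the optimality event and applying Jensen gives
\[
G^Q_h(s,a,\beta) \;\geq\; \log \Prob(\Opt_h(s,a)) + \beta\, \Exp\!\left[\,Q^{M,\star}_h(s,a)\mid \Opt_h(s,a)\right],
\]
and averaging over $a$ with weights $\Prob(\Opt_h(s,a))$ yields $-\Hc(\Prob(\Opt_h(s))) + \beta \Exp V^{M,\star}_h(s)$, using $\sum_a \Prob(\Opt_h(s,a))\,\Exp[Q^{M,\star}_h(s,a)\mid\Opt_h(s,a)] = \Exp \max_a Q^{M,\star}_h(s,a) = \Exp V^{M,\star}_h(s)$. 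Combining this with the optimism bound $\beta K_h(s,a) \geq G^Q_h(s,a,\beta)$ and your log-sum-exp expansion of the KL --- now written with the true $\Prob(\Opt_h(s,a))$, whose entropy term cancels --- gives exactly the theorem, and, as you anticipated, no induction on $h$ is needed beyond what is already packaged in (\ref{e-optimism}). This is in essence what the paper's proof does, phrased there as a chain-rule decomposition of the KL divergence between the true joint posterior over $(V^{M,\star}_h(s), \Opt_h(s))$ and an exponentially tilted approximation; your remaining steps (dropping the non-positive $G^Q - \beta K$ terms and the Jensen bound $\log Z \geq \beta \Exp V^{M,\star}_h(s)$) are fine but only become relevant once the KL is expanded with respect to the correct distribution.
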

\begin{proof}
Fix some particular state $s \in \Sc$, and
let the joint posterior over value and optimality be denoted by
\begin{equation}
\label{e-joint}
\Prob(V^{M, \star}_h(s), \Opt_h(s,a)) =  \Prob(Q_h^{M, \star}(s, a) | \Opt_h(s, a)) \Prob(\Opt_h(s,a)),
\end{equation}
where $\Prob(Q_h^{M, \star}(s, a) | \Opt_h(s, a))$ is the conditional
distribution of the Q-values, conditioned on optimality.  Recall that from
equation (\ref{e-kl-approx}) we have
approximated the conditional posterior probability of optimality as
\[
\tilde \Prob( \Opt_h(s, a) | Q^{M, \star}_h(s,a) ) \propto \exp \beta Q^{M, \star}_h(s,a),
\]
for some $\beta > 0$, which when yields
\[
\tilde \Prob(\Opt_h(s, a)) \propto \exp G^Q_h(s,a,\beta).
\]
From Bayes' rule this implies the following approximation to the conditional distribution
\begin{align}
\begin{split}
\label{e-approxp-app}
\tilde \Prob(Q_h^{M, \star}(s, a) | \Opt_h(s, a)) &= \frac{\tilde \Prob( \Opt_h(s, a) | Q^{M, \star}_h(s,a) ) \Prob(Q_h^{M, \star}(s, a))}{\tilde \Prob(\Opt_h(s, a))} \\
&= \Prob(Q_h^{M, \star}(s, a))\exp (\beta Q_h^{M, \star}(s, a) - G^Q_h(s,a,\beta)).
\end{split}
\end{align}
This is known as the \emph{exponential tilt} of the posterior distribution $\Prob(Q_h^{M, \star}(s, a))$
and has a myriad of applications in statistics \citep{asmussen2007stochastic}.
From this we could derive an approximation to the joint posterior
(\ref{e-joint}), however, the K-learning policy does not follow
(\ref{e-pol-cgf}) since computing the cumulant generating function is
non-trivial. Instead we compute the K-values, which are the solution to a
Bellman equation that provide a guaranteed upper bound on the cumulant
generating function, and the K-learning policy is thus
\[
\pi_h^{\rm K}(s,a) \propto \exp(\beta K_h(s,a)),
\]
where we have \citep{klearning}
\begin{equation}
\label{e-app-ub}
\beta K_h(s,a) \geq G^Q_h(s,a)(\beta).
\end{equation}
With that in mind we take our approximation to the joint posterior
(\ref{e-joint}) to be
\[
\tilde \Prob(V^{M, \star}_h(s), \Opt_h(s, a)) = \tilde \Prob(Q_h^{M, \star}(s, a) | \Opt_h(s, a)) \pi^{\rm K}_h(s,a).
\]
Now consider the KL-divergence between the true joint posterior and our approximate
one, a quick calculation yields
\begin{align}
\begin{split}
\label{e-app-kl-expansion}
&\kld{\Prob(V^{M, \star}_h(s), \Opt_h(s, a)) }{\tilde \Prob(V^{M, \star}_h(s), \Opt_h(s, a)) } = \kld{\Prob(\Opt_h(s))}{\pi^{\rm K}_h(s)} +\\
&\quad \sum_{a} \Prob(\Opt_h(s,a)) \kld{\Prob(Q_h^{M, \star}(s, a) | \Opt_h(s, a))}{\tilde \Prob(Q_h^{M, \star}(s, a) | \Opt_h(s, a))},
\end{split}
\end{align}
for timestep $h$ and state $s$.
Considering the terms on the right hand side of (\ref{e-app-kl-expansion}) separately we have
\[
\kld{\Prob(\Opt_h(s))}{\pi^{\rm K}_h(s)} = -\Hc(\Prob(\Opt_h(s))) - \beta \sum_a \Prob(\Opt_h(s, a)) K_h(s,a) + \log \sum_a \exp \beta K_h(s,a)
\]
where $\Hc$ denotes the entropy, and using (\ref{e-approxp-app})
\begin{align*}
&\sum_{a} \Prob(\Opt_h(s,a)) \kld{\Prob(Q_h^{M, \star}(s, a) | \Opt_h(s, a))}{\tilde \Prob(Q_h^{M, \star}(s, a) | \Opt_h(s, a))}\\
&\quad = \sum_a \Prob(\Opt_h(s,a)) G^Q_h(s,a)(\beta) -\beta \sum_a \Prob(\Opt_h(s,a)) \Exp (Q_h^{M, \star}(s, a) | \Opt_h(s, a))\\
&\quad+ \sum_a \Prob(\Opt_h(s,a)) \kld{\Prob(Q_h^{M, \star}(s, a) | \Opt_h(s, a)) }{\Prob(Q_h^{M, \star}(s, a))}.
\end{align*}
Now we sum these two terms, using (\ref{e-app-ub}) and the following identities
\[
\sum_a \Prob(\Opt_h(s,a)) \Exp (Q_h^{M, \star}(s, a) | \Opt_h(s, a)) = \Exp \max_a Q_h^{M, \star}(s, a) = \Exp V^{M, \star}_h(s)
\]
and
\begin{align*}
&\sum_a \Prob(\Opt_h(s,a)) \kld{ \Prob(Q_h^{M, \star}(s, a) | \Opt_h(s, a)) }{\Prob(Q_h^{M, \star}(s, a))} \\
&\quad = \sum_a \Prob(\Opt_h(s,a)) \int \Prob(Q_h^{M, \star}(s, a) | \Opt_h(s, a)) \log(\Prob(\Opt_h(s, a) | Q_h^{M, \star}(s, a))) + \Hc(\Prob(\Opt_h(s)))\\
&\quad \leq \Hc(\Prob(\Opt_h(s))),
\end{align*}
since $\log(\Prob(\Opt_h(s, a) | Q_h^{M, \star}(s, a))) \leq 0$,
we obtain
\[
\kld{\Prob(V^{M, \star}_h(s), \Opt_h(s, a)) }{\tilde \Prob(V^{M, \star}_h(s), \Opt_h(s, a))} \leq \log \sum_a \exp \beta K_h(s,a) - \beta \Exp V_h^{M, \star}(s).
\]
The theorem follows from this and the fact that the K-learning value function is
defined as
\[
V^{\rm K}_h(s) = \beta^{-1} \log \sum_a \exp \beta K_h(s,a)
\]
as well as the fact that
\[
\kld{\Prob(\Opt_h(s))}{\pi^{\rm K}_h(s)} \leq \kld{\Prob(V^{M, \star}_h(s), \Opt_h(s, a)) }{\tilde \Prob(V^{M, \star}_h(s), \Opt_h(s, a))}
\]
from equation (\ref{e-app-kl-expansion}).
\end{proof}

\subsection{Problem \ref{problem:one_unknown} K-learning details}
For a bandit problem the K-learning policy is given by
\[
\pi^{\rm K}_i \propto \exp G^\mu_i(\beta),
\]
which requires the cumulant generating function of the posterior over each arm.
For arm $1$ and the distractor arms there is no uncertainty, in which case the
cumulant generating function is given by
\[
G^\mu_i(\beta) = \mu_i \beta,\quad i = 1, 3, \ldots N.
\]
In the case of arm $2$ the cumulant generating function is
\[
G^\mu_2(\beta) = \log\big((1/2)\exp (2 \beta) + (1/2) \exp (-2 \beta)\big).
\]
In \citep{klearning} it was shown that the optimal choice of $\beta$ is given by
\[
\beta^{\star} = \argmin_{\beta \geq 0}\left( \beta^{-1} \log \sum_{i=1}^N \exp G^\mu_i(\beta)\right),
\]
which requires solving a convex optimization problem in variable $\beta^{-1}$.
In the case of problem \ref{problem:one_unknown} the optimal choice of $\beta
\approx 10.23$, which yields $\pi^{kl}_2 \approx 0.94$.  Then, once arm $2$ has
been pulled once and the true reward of arm $2$ has been revealed, its cumulant
generating function has the same form as the others, and then the optimal choice
of $\beta$ is simply
\[
\beta^{\star} = \argmin_{\beta \geq 0}\left( \beta^{-1} \log \sum_{i=1}^N \exp \mu_i \beta \right)= \infty,
\]
at which point K-learning is greedy with respect to the optimal arm.

\subsection{Implementation details for \texttt{bsuite} evaluation}
\label{app:bsuite-implementation}

All three algorithms use the same neural network architecture consisting of an
MLP (multilayer perceptron) with a single hidden layer with $50$ hidden units.
All three algorithms used a replay buffer of the most recent $10^4$ transitions
to allow re-use of data.  For all three the Adam optimizer
\citep{kingma2014adam} was used with learning rate $10^{-3}$ and batch-size
$128$, and learning is performed at every time-step. For both K-learning and
soft Q-learning the temperature was set at $\beta^{-1} = 0.01$. For Bootstrap
DQN we chose an ensemble of size $20$, and used the randomized prior functions
\citep{osband2018randomized} with scale $3.$. For K-learning, in order to
estimate the cumulant generating function of the reward, we used an ensemble of
neural networks predicting the reward for each state and action and used these
to calculate the empirical cumulant generating function over them. Each of these
was a single hidden layer MLP with $10$ hidden units. Finally, we noted that
actually training a small ensemble of K-networks performed better than a single
network, we used an ensemble of size $10$ for this purpose as well as using
randomized priors to encourage diversity between the elements of the ensemble
with scale $1.0$. The K-learning policy was the Boltzmann policy over all the
ensemble K-values at each state.

\newpage

\onecolumn
\ifx\newgeometry\undefined
\else
\newgeometry{top=20mm, bottom=20mm, left=20mm, right=20mm} 
\fi

\bsuitetitle{Making sense of RL and Inference}
\label{app:bsuite-report}
\bsuiteabstract

\subsection{Agent definition}
\label{app:bsuite-agents}
All agents were run with the same network architecture (a single layer MLP with 50 hidden units a ReLU activation) adapting DQN \citep{mnih-atari-2013}.
Full hyperparameters in Appendix \ref{app:bsuite-implementation}.
\begin{itemize}[noitemsep, nolistsep, leftmargin=*]
    \item {\bf boot\_dqn}: bootstrapped DQN with prior networks \citep{osband2016deep,osband2018randomized}.
    \item {\bf k\_learn}: K-learning via ensemble with prior networks \citep{klearning,osband2018randomized}.
    \item {\bf soft\_q}: soft Q-learning with temperature $\beta^{-1}=0.01$ \citep{o2016pgq}.
\end{itemize}

\subsection{Summary scores}
\label{app:bsuite-scores}

Each \bsuite\ experiment outputs a summary score in [0,1].
We aggregate these scores by according to key experiment type, according to the standard analysis notebook.
A detailed analysis of each of these experiments may be found in a notebook hosted on Colaboratory: \href{http://bit.ly/rl-inference-bsuite}{\texttt{bit.ly/rl-inference-bsuite}}.

\begin{figure}[h!]
\centering
\begin{minipage}[t]{.5\textwidth}
  \centering
  \includegraphics[width=\textwidth,height=70mm,keepaspectratio]{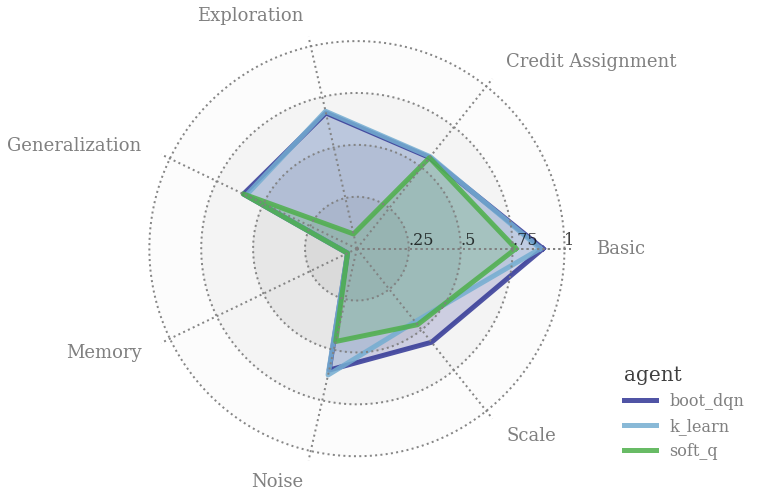}
  \captionof{figure}{Snapshot of agent behaviour.}
  \label{fig:radar}
\end{minipage}%
\begin{minipage}[t]{.5\textwidth}
  \centering
  \includegraphics[width=\textwidth,height=70mm,keepaspectratio]{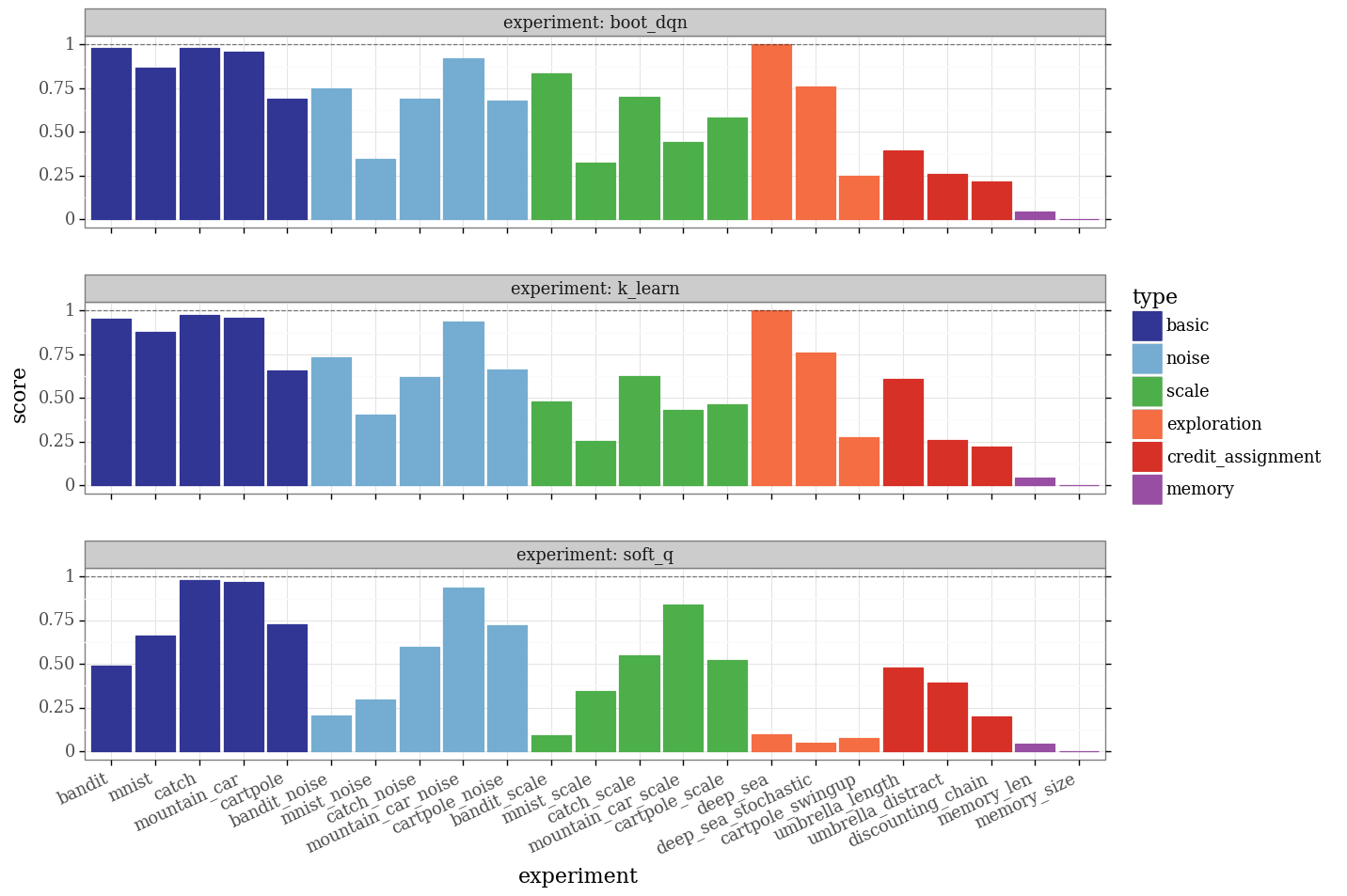}
  \captionof{figure}{Score for each \bsuite\ experiment.}
  \label{fig:bar}
\end{minipage}
\end{figure}

\subsection{Results commentary}
\label{app:bsuite-commentary}

Overall, we see that the algorithms K-learning and Bootstrapped DQN perform extremely similarly across \bsuite\ evaluations.
However, there is a clear signal that soft Q-learning performs markedly worse on the tasks requiring efficient exploration.
This observation is consistent with the hypothesis that algorithms motivated by `RL as Inference' fail to account for the value of exploratory actions.

Beyond this major difference in exploration score, we see that Bootstrapped DQN outperforms the other algorithms on problems varying `Scale'.
This too is not surprising, since both soft Q and K-learning rely on a temperature tuning that will be problem-scale dependent.
Finally, we note that soft Q also performs worse on some `basic' tasks, notably `bandit' and `mnist'.
We believe that the relatively high temperature (tuned for best performance on Deep Sea) leads to poor performance on these tasks with larger action spaces, due to too many random actions.

\newpage

\end{document}